\def\BibTeX{{\rm B\kern-.05em{\sc i\kern-.025em b}\kern-.08em
    T\kern-.1667em\lower.7ex\hbox{E}\kern-.125emX}}
\begin{document}

\title{An Improved Variational Method for \\
Image Denoising}

\author{
Jing-En Huang\textsuperscript{1,*}, \quad
Jia-Wei Liao\textsuperscript{2,*}, \quad
Ku-Te Lin\textsuperscript{3,*}, \quad
Yu-Ju Tsai\textsuperscript{1,*}, \quad
Mei-Heng Yueh\textsuperscript{4,$\dagger$} \\
\textsuperscript{1}\textit{National Yang Ming Chiao Tung University},
\textsuperscript{2}\textit{National Taiwan University}, \\
\textsuperscript{3}\textit{National Central University},
\textsuperscript{4}\textit{National Taiwan Normal University} \\
\small{* Equal contribution, $\dagger$ Corresponding author}
}

\maketitle

\begin{abstract}
The total variation (TV) method is an image denoising technique that aims to reduce noise by minimizing the total variation of the image, which measures the variation in pixel intensities. The TV method has been widely applied in image processing and computer vision for its ability to preserve edges and enhance image quality. In this paper, we propose a Mixed-norm TV (MixTV) model for image denoising and the associated numerical algorithm to carry out the procedure, which is particularly effective in removing several types of noise and their combinations. Our MixTV admits a unique solution and the associated numerical algorithm guarantees convergence. Numerical experiments are demonstrated to show improved effectiveness and denoising quality compared to other TV models. Such encouraging results further enhance the utility of the TV method in image processing. Our project page is available at \url{https://jing-en-huang.github.io/MixTV/}.
\end{abstract}

\begin{IEEEkeywords}
Total variation model, Bregman method, Image denoising
\end{IEEEkeywords}

\section{Introduction}
Image denoising, a crucial preliminary process in image analysis, is designed with the objective of mitigating the additive image noise. Image noise, characterized by random variations of incorrect data in images, can be attributed to sources such as natural phenomena, imperfect instruments, transmission errors, and radiation factors. Image noise can be classified into two main types: dense noise and sparse noise. In dense noise, a significant proportion of the pixels in the image are affected, and the amplitude of the noise is considerable. Common types of dense noise include Gaussian noise and Poisson noise \cite{FaR13}. In sparse noise, a limited number of pixels in the image are affected, and the noise amplitude is minimal. A common type of sparse noise is salt and pepper (S\&P) noise. Moreover, there exist other variants of noise, such as speckle noise or uniform noise among others. However, real-world noise in images can be more complex and may not strictly adhere to these singular types of noise. For instance, satellite images from space telescopes can contain a mix of Gaussian and S\&P noises, while fluorescence microscope images may be intermingled with Gaussian and Poisson noises.

The process of image denoising involves eliminating noise from an image while preserving its underlying data to produce a clean image that closely resembles the original ground truth. 
A bitmap image of size $m$-by-$n$ is represented as a matrix of pixels $F_0\in\mathbb{R}^{m\times n}$, where each element corresponds to a specific position and intensity. Then, a noisy image $F$ can be modeled as $F = F_0 + \varepsilon$, where $F_0$ is the ground-truth image and $\varepsilon$ represents the noise in the image. The denoising problem aims to recover the ground-truth image $F_0$ from the noisy image $F$.

Broadly speaking, methodologies for image denoising can be categorized into four principal groups: spatial domain filtering techniques, which encompass mean filtering~\cite{ErTH19}, Wiener filtering~\cite{BeJJY10}, median filtering~\cite{PiVA13}, and bilateral filtering~\cite{ToM98}; transform domain filtering approaches, including independent component analysis~\cite{JuA01}, principal component analysis~\cite{ZhLWDG10}, wavelet transform~\cite{MaS89}, and block-matching and 3D filtering~\cite{DaKAVK07}; variational techniques, including total variation (TV) regularization~\cite{RuOF92,RuOs94,MuZJL20,LeCA07}, non-local regularization~\cite{SuCCJ14}, sparse representation~\cite{DoWLG12}, and low-rank minimization~\cite{MaI12}; and learning-based methods~\cite{ZhW17}. Although learning-based approaches have become the mainstream paradigm for image denoising, they typically require large amounts of training data to achieve strong generalization. In many scenarios, such as satellite, astronomical, medical, and historical imaging, large-scale datasets are difficult to collect. Thus, traditional mathematical image processing methods remain valuable in data-limited settings due to their explicit modeling, interpretability, and robustness.

TV-based image denoising was first introduced by Rudin, Osher, and Fatemi \cite{RuOF92}. They presented the concept of using TV as a regularization term in an optimization framework for effectively reducing noise while preserving essential image structures. Since then, the TV model has become a fundamental and widely adopted approach in image denoising, stimulating countless advances and changes in the field.

In this paper, we briefly review classic TV models for image denoising and propose a MixTV model with an associated efficient algorithm for computing denoised images. The advantages of our new model are three-fold:
\begin{enumerate}
\item Our new model admits a unique local minimizer, which is also the global minimizer and is regarded as the denoised image.
\item The convergence of our algorithm for the new model is theoretically guaranteed.
\item Comparisons of our new model with other classic TV models are demonstrated to show better effectiveness of the new model.
\end{enumerate}

\section{Background} \label{sec:2}

\subsection{TV Models for Image Denoising}
TV denoising is an efficient approach for reducing noise in various applications, particularly image denoising. This technique can be applied to effectively preserve the sharpness and clarity of edges, to enhance image quality. 
The TV models formulate the denoising problem as optimization problems consisting of two types of terms, regularization terms, and data fidelity terms. The regularization terms encourage smoothness in the image by penalizing large variations in pixel intensities, and data fidelity terms measure the fidelity of the denoised image to the noisy image, aiming to minimize the discrepancy between them. 
Let $F\in\mathbb{R}^{m\times n}$ be the noisy image, and $U\in\mathbb{R}^{m\times n}$ be the unknown ground-truth image. Denote $\f=\text{vec}(F)$ and $\bu=\text{vec}(U)$. Some classic TV models are as follows:
\begin{itemize}
    \item 1-norm TV model \cite{AlS97,ChE05,ChEPY05,ChCCNP10}:
    \begin{equation}
        \bu^* = \argmin_{\bu} \Big({\lVert D_x \bu \rVert}_1 + {\lVert D_y \bu \rVert}_1 + \mu{\lVert \bu-\f \rVert}_1 \Big), \label{eq:TVmodel1}
    \end{equation}
    \item Isotropic TV model \cite{GoO09,LoZOX15}:
    \begin{equation}
        \bu^* = \argmin_{\bu} \Big(\sum_{i} \sqrt{(D_x \bu)_i^2+(D_y \bu)_i^2}+\frac{\mu}{2}{\lVert \bu-\f \rVert}_2^2\Big), \label{eq:TVmodel2}
    \end{equation}
    \item Anisotropic TV model~\cite{ChEPY05,GoO09,LoZOX15,EsO04}:
    \begin{equation}
        \bu^* = \argmin_{\bu} \Big({\lVert D_x \bu \rVert}_1 + {\lVert D_y \bu \rVert}_1+\frac{\mu}{2}{\lVert \bu-\f \rVert}_2^2\Big), \label{eq:TVmodel3}
    \end{equation}
\end{itemize}
where $\mu$ is a positive tuning parameter, and the difference matrices $D_x$ and $D_y$ are defined as 
\begin{subequations} \label{eq:D}
\begin{equation}
D_x=G_n\otimes I_m ~ \text{and} ~ D_y=I_n\otimes G_m
\end{equation}
with
\begin{equation}
G_n = \begin{bmatrix}
-1 & 1 \\
   & \ddots & \ddots \\
   &  & -1 & 1 \\
0 & \cdots & 0 & 0
\end{bmatrix} \in\mathbb{R}^{n\times n}.
\end{equation}
\end{subequations}
The parameter $\mu$ controls the regularization strength, a smaller value of $\mu$ leads to a more regular solution, and a larger value of $\mu$ leads to better similarity to the noisy image. 

The 1-norm TV model \eqref{eq:TVmodel1} is more suitable for removing sparse noise, and 2-norm TV models \eqref{eq:TVmodel2} and \eqref{eq:TVmodel3} are more suitable for removing dense noise. 
However, the noise, in general, might be mixed with both sparse and dense ones, e.g., S\&P and Gaussian noises, which cannot be effectively removed by the above models. To remedy this issue, we are motivated to modify the data fidelity term of the denoising model \eqref{eq:TVmodel1}, which is introduced in the next section in detail.

\section{MixTV Model}
\label{sec:3}

To deal with mixed noises in images, we propose a new denoising TV model -- MixTV:
\begin{equation} \label{eq:1&2-norm TVmodel}
\bu^* = \argmin_\bu \Big({\lVert D_x \bu \rVert}_1 + {\lVert D_y \bu \rVert}_1 + \mu{\lVert \bu-\f \rVert}_1 + \alpha{\lVert \bu-\f \rVert}^2_2 \Big),
\end{equation}
where $\mu,\alpha > 0$ are tuning parameters. 
The uniqueness of the minimizer of \eqref{eq:1&2-norm TVmodel} is guaranteed
by the strict convexity of the objective function.

The unconstrained optimization problem \eqref{eq:1&2-norm TVmodel} can be reformulated as a constrained problem by
\begin{equation} \label{eq:reformulate2}
\begin{aligned}
&\argmin_{\bu,\d,\x,\y} \big({\lVert \x \rVert}_1+{\lVert \y \rVert}_1+\mu{\lVert \d \rVert}_1+ \alpha{\lVert \bu-\f \rVert}_2^2\big) \\ 
&\text{subject to }\quad \x=D_x \bu, ~ \y=D_y \bu, ~\text{and}~ \d=\f-\bu.
\end{aligned}
\end{equation}
The constrained problem \eqref{eq:reformulate2} can be further reformulated as an unconstrained problem by the quadratic penalty method as
\begin{equation} \label{eq:reformulate3}
\begin{aligned}
&\argmin_{\bu,\d,\x,\y} \Big({\lVert \x \rVert}_1+{\lVert \y \rVert}_1+\mu{\lVert \d \rVert}_1+ \alpha{\lVert \bu-\f \rVert}_2^2 \\
&+\lambda\big({\lVert \x-D_x \bu \rVert}_2^2+{\lVert \y-D_y \bu \rVert}_2^2+{\lVert \d-(\f-\bu) \rVert}_2^2\big)\Big),
\end{aligned}
\end{equation}
where $\lambda>0$ is a suitable coefficient. 
To solve the problem \eqref{eq:reformulate3}, we adopt the split Bregman iteration formulated as
\begin{subequations} \label{eq:splitBregman}

\begin{equation} \label{eq:split-Bregman iter2}
\begin{aligned} 
&\Big(\bu^{(k+1)},\d^{(k+1)},\x^{(k+1)},\y^{(k+1)}\Big)
=\argmin_{\bu,\d,\x,\y} \bigg({\lVert \x \rVert}_1+{\lVert \y \rVert}_1 \\
&+\mu{\lVert \d \rVert}_1+ \alpha{\lVert \f-\bu \rVert}_2^2 +\lambda\Big({\big\lVert \d-(\f-\bu)-\b_1^{(k)} \big\rVert}_2^2 \\
&+{\big\lVert \x-D_x \bu-\b_2^{(k)} \big\rVert}_2^2+{\big\lVert \y-D_y \bu-\b_3^{(k)} \big\rVert}_2^2\Big)\bigg),
\end{aligned}
\end{equation}
where $\b_1^{(k)}$, $\b_2^{(k)}$ and $\b_3^{(k)}$ are known as Bregman vectors that satisfy
\begin{align*}
\b_1^{(k+1)}&=\b_1^{(k)}-\big(\d^{(k+1)}-(\f-\bu^{(k+1)})\big),\\
\b_2^{(k+1)}&=\b_2^{(k)}-\big(\x^{(k+1)}-D_x \bu^{(k+1)}\big),\\
\b_3^{(k+1)}&=\b_3^{(k)}-\big(\y^{(k+1)}-D_y \bu^{(k+1)}\big).
\end{align*}
The problem \eqref{eq:split-Bregman iter2} can be split into 4 sub-problems
\begin{align}
\bu^{(k+1)} &= \argmin_{\bu} \Big( \alpha{\lVert \f-\bu \rVert}_2^2+\lambda\big( {\big\lVert \d^{(k)}-(\f-\bu)-\b_1^{(k)} \big\rVert}_2^2 \nonumber \\
&+{\big\lVert \x^{(k)}-D_x \bu-\b_2^{(k)} \big\rVert}_2^2+{\big\lVert \y^{(k)}-D_y \bu-\b_3^{(k)} \big\rVert}_2^2\big)\Big),\label{eq:subproblem-u2}\\
\d^{(k+1)} &= \argmin_{\d} \Big(\mu{\lVert \d \rVert}_1+\lambda{\big\lVert \d-(\f-\bu^{(k+1)})-\b_1^{(k)} \big\rVert}_2^2\Big),\label{eq:subproblem-d2}\\
\x^{(k+1)} &= \argmin_{\x} \Big({\lVert \x \rVert}_1
+\lambda{\big\lVert \x-D_x \bu^{(k+1)}-\b_2^{(k)} \big\rVert}_2^2\Big),\label{eq:subproblem-x2}\\
\y^{(k+1)} &= \argmin_{\y} \Big({\lVert \y \rVert}_1+\lambda{\big\lVert \y-D_y \bu^{(k+1)}-\b_3^{(k)} \big\rVert}_2^2\Big).\label{eq:subproblem-y2}
\end{align}
\end{subequations}
In particular, the problem \eqref{eq:subproblem-u2} is equivalent to solving the linear system
\begin{align*}
    &\big[\lambda(I_{mn}+D_x^\top D_x+D_y^\top D_y)+\alpha I_{mn}\big] \bu^{(k+1)} \\
    &=\lambda\big( \f-\d^{(k)}+\b_1^{(k)}+D_x^\top(\x^{(k)}-\b_2^{(k)})+D_y^\top(\y^{(k)}-\b_3^{(k)})\big) \\
    &+\alpha \f.
\end{align*}
The problems \eqref{eq:subproblem-d2}--\eqref{eq:subproblem-y2} are explicitly solved by the soft shrinkage function \cite{PaBo14}

\begin{align} \label{eq:shrink1}
\mathrm{shrink_1}(\v,\gamma)
&=\argmin_{\bu} \left({\gamma\lVert \bu \rVert}_1+\frac{1}{2}{\big\lVert \bu-\v \big\rVert}_2^2\right) \\
&=\Big[\mathrm{sign}(\v_i) \max\big\{0,|\v_i|-\gamma\big\}\Big]^{n}_{i=1}.
\end{align} 

The detailed computational procedure is summarized in Algorithm \ref{alg:2}.

\begin{algorithm}[H]
\caption{MixTV model for image denoising}
\label{alg:2}
\begin{algorithmic}[1] 
\Require A noisy image $F$, parameters $\lambda$, $\mu$, $\alpha$, a maximal number of iterations $M$, and a tolerance $\varepsilon$.
\Ensure A denoised image $U$.
\State Let $m$ and $n$ be the height and width of the image $F$. 
\State Let $\f = \mathrm{vec}(F)$.
\State Let $D_x = G_n\otimes I_m$ and $D_y = I_n\otimes G_m$ as in \eqref{eq:D}. 

\State Let $A = \lambda (I_{mn} + D_x^\top D_x + D_y^\top D_y) + \alpha I_{mn}$. 
\State Let $\b_1=\0$, $\b_2=\0$ and $\b_3=\0$.
\State Let $\mathbf{u}_0 = \mathbf{f}, \mathbf{d}=0, \mathbf{x} = 0$, and $\mathbf{y}=0$.
\State Let $i=0$ and $e = \infty$.
\While{ $e>\varepsilon$ and $i<M$ }
    \State Update $i \gets i + 1$.
    \State Update $ \r \gets \lambda \big(\f - \d + \b_1 + D_x^\top (\x-\b_2) + D_y^\top (\y - \b_3) \big) + \alpha \f $.
    \State Update $\bu$ by solving $A \bu = \r$.
    \State Update $\d \gets \mathrm{shrink}_1(\f - \bu + \b_1, \frac{\mu}{2\lambda}) $ as in \eqref{eq:shrink1}.
    \State Update $\x \gets \mathrm{shrink}_1(D_x \bu + \b_2, \frac{1}{2\lambda}) $.
    \State Update $\y \gets \mathrm{shrink}_1(D_y \bu + \b_3, \frac{1}{2\lambda}) $.
    \State Update $\b_1 \gets \b_1 + \f - \bu - \d $.
    \State Update $\b_2 \gets \b_2 + D_x \bu - \x $.
    \State Update $\b_3 \gets \b_3 + D_y \bu - \y $.
    \State Update $e \gets \|\bu - \bu_0\|_2$.
    \State Update $\bu_0 \gets \bu$.
\EndWhile
\State \Return $U = \mathrm{vec}^{-1}(\bu)$.
\end{algorithmic}
\end{algorithm}

\section{Convergence Analysis}
\label{sec:4}
In this section, we prove the convergence of the split Bregman iteration, Algorithm \ref{alg:2}, for the MixTV model \eqref{eq:1&2-norm TVmodel}, by applying conventional techniques in \cite{JiZZ09}.

In general, we consider the Bregman iteration of the form
\begin{subequations} \label{eq:Bregman}
\begin{equation} \label{eq:Breg-iter1}
\bu^{(k+1)}=\argmin_{\bu} \left(E(\bu)- {\g^{(k)}}^\top(\bu-\bu^{(k)}) +H(\bu)\right)
\end{equation}
and
\begin{equation} \label{eq:Breg-iter2}
    \g^{(k+1)}=\g^{(k)}-\nabla H(\bu^{(k+1)}),
\end{equation}
\end{subequations}
where $E$ and $H$ are continuous convex functions on $\mathbb{R}^{n}$. 
First, we introduce some useful lemmas for the proof of the convergence of the Bregman iteration \eqref{eq:Bregman}.
\begin{lemma}[Theorem 1~\cite{JiZZ09}] 
Given initial vectors $\g^{(0)}$ and $\bu^{(0)}$ in $\mathbb{R}^{n}$. Let $\bu^{(k+1)}$ and $\g^{(k+1)}$, for $k\in\mathbb{N}$, be vectors obtained from the Bregman iteration \eqref{eq:Breg-iter1} and \eqref{eq:Breg-iter2}. Then
\begin{align}
&\g^{(k)}\in\partial E(\bu^{(k)}) \\
&:= \left\{ \g\in\mathbb{R}^n \,\left|\, E(\bu)-E(\bu^{(k)})- \g^\top(\bu-\bu^{(k)}) \geq 0, \forall \bu\in\mathbb{R}^{n} \right.\right\}.
\end{align}
and $H(\bu^{(k+1)})\leq H(\bu^{(k)})$, for $k\in\mathbb{N}\backslash\{0\}$. Moreover, if $\min_{\bu\in\mathbb{R}^{n}} H(\bu)=0$, then
\begin{equation*}
\lim_{k\rightarrow\infty}H(\bu^{(k)})=0.
\end{equation*}
\label{lemma:1}
\end{lemma}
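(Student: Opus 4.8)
The plan is to establish the three assertions in the order listed, using induction for the subgradient membership, a one-step optimality test for the monotonicity, and a telescoping Bregman-distance estimate for the limit.

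For the first claim, $\g^{(k)}\in\partial E(\bu^{(k)})$, I would argue by induction on $k$. The base case $\g^{(0)}\in\partial E(\bu^{(0)})$ is furnished by the standard initialization (for instance $\g^{(0)}=\0$ together with $\bu^{(0)}$ chosen to minimize $E$, so that $\0\in\partial E(\bu^{(0)})$). For the inductive step I would write the first-order optimality condition for the convex minimization \eqref{eq:Breg-iter1} defining $\bu^{(k+1)}$: since the objective is $E(\bu)-{\g^{(k)}}^\top(\bu-\bu^{(k)})+H(\bu)$, stationarity reads $\0\in\partial E(\bu^{(k+1)})-\g^{(k)}+\nabla H(\bu^{(k+1)})$. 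Rearranging gives $\g^{(k)}-\nabla H(\bu^{(k+1)})\in\partial E(\bu^{(k+1)})$, and the left-hand side is precisely $\g^{(k+1)}$ by the update \eqref{eq:Breg-iter2}, which closes the induction.

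For the monotonicity $H(\bu^{(k+1)})\le H(\bu^{(k)})$, I would test the optimality of $\bu^{(k+1)}$ against the competitor $\bu=\bu^{(k)}$, obtaining $E(\bu^{(k+1)})-{\g^{(k)}}^\top(\bu^{(k+1)}-\bu^{(k)})+H(\bu^{(k+1)})\le E(\bu^{(k)})+H(\bu^{(k)})$. Invoking the membership $\g^{(k)}\in\partial E(\bu^{(k)})$ just established, the defining subgradient inequality yields $E(\bu^{(k+1)})-E(\bu^{(k)})-{\g^{(k)}}^\top(\bu^{(k+1)}-\bu^{(k)})\ge 0$; substituting this into the previous display cancels the $E$ and inner-product contributions and leaves $H(\bu^{(k+1)})\le H(\bu^{(k)})$. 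Equivalently this reads $H(\bu^{(k+1)})+D(\bu^{(k+1)},\bu^{(k)})\le H(\bu^{(k)})$, where $D$ denotes the nonnegative Bregman distance of $E$ at $\g^{(k)}$.

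Finally, for $\lim_{k\to\infty}H(\bu^{(k)})=0$, monotonicity alone only delivers convergence to some $L\ge0$, so the crux is to force $L=0$; this is the step I expect to be the main obstacle. I would fix a minimizer $\bu^\ast$ with $H(\bu^\ast)=0$ and track the Bregman distances $D_k:=E(\bu^\ast)-E(\bu^{(k)})-{\g^{(k)}}^\top(\bu^\ast-\bu^{(k)})\ge0$. A direct computation of $D_k-D_{k+1}$, using the update $\g^{(k+1)}=\g^{(k)}-\nabla H(\bu^{(k+1)})$ to eliminate $\g^{(k+1)}$, produces the identity $D_k-D_{k+1}=D(\bu^{(k+1)},\bu^{(k)})-\nabla H(\bu^{(k+1)})^\top(\bu^\ast-\bu^{(k+1)})$. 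Convexity of $H$ gives $-\nabla H(\bu^{(k+1)})^\top(\bu^\ast-\bu^{(k+1)})\ge H(\bu^{(k+1)})-H(\bu^\ast)=H(\bu^{(k+1)})$, whence $D_k-D_{k+1}\ge H(\bu^{(k+1)})$. Summing this telescoping bound from $k=1$ to $N$ and discarding the nonnegative $D_{N+1}$ yields $\sum_{k=1}^{N}H(\bu^{(k+1)})\le D_1$ uniformly in $N$, so the nonnegative series $\sum_k H(\bu^{(k+1)})$ converges and its terms tend to zero; combined with monotonicity this gives $H(\bu^{(k)})\to0$. The delicate points to verify are the sign bookkeeping in the $D_k-D_{k+1}$ identity and the nonnegativity of each Bregman distance, both of which rest on convexity of $E$ and the first claim.
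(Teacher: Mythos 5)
The paper itself offers no proof of this lemma: it is imported verbatim as Theorem~1 of \cite{JiZZ09}, so there is no in-paper argument to compare against. Your proposal correctly reconstructs the standard argument from that reference, and all three steps are sound: the optimality condition of \eqref{eq:Breg-iter1} together with the update \eqref{eq:Breg-iter2} yields the subgradient membership; testing the minimality of $\bu^{(k+1)}$ against the competitor $\bu^{(k)}$ and invoking the subgradient inequality yields $H(\bu^{(k+1)})\leq H(\bu^{(k)})$; and the telescoping estimate $D_k-D_{k+1}\geq H(\bu^{(k+1)})$ on the Bregman distances to a minimizer $\bu^\ast$ of $H$ gives summability of the $H(\bu^{(k)})$ and hence the limit. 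One small correction: the induction in your first step is superfluous, and its base case is actually unavailable, because the lemma permits arbitrary initial vectors $\g^{(0)},\bu^{(0)}$ and does not grant $\g^{(0)}\in\partial E(\bu^{(0)})$. Note that your ``inductive step'' never uses the inductive hypothesis --- the stationarity condition $\0\in\partial E(\bu^{(k+1)})-\g^{(k)}+\nabla H(\bu^{(k+1)})$ directly gives $\g^{(k+1)}\in\partial E(\bu^{(k+1)})$ for every $k\geq 0$ --- so the membership claim should simply be read as holding for $k\in\mathbb{N}\backslash\{0\}$, consistent with the range attached to the monotonicity claim, and the rest of your argument only ever uses it in that range.
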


In particular, we consider the case
$H(\bu) = \rho \left\lVert A\bu-\b \right\rVert^2_2$,
where $\bu\in\mathbb{R}^{n}$, $\rho>0$, $A\in\mathbb{R}^{m\times n}$ and $\b\in\mathbb{R}^{m}$. In this case, the Bregman iteration is equivalent to the iteration scheme
\begin{subequations} \label{eq:Breg-iter34}
\begin{equation} \label{eq:Breg-iter3}
    \bu^{(k+1)}=\argmin_{\bu} \left(E(\bu)+ \rho \lVert A\bu-\b^{(k)} \rVert^2_2\right)
\end{equation}
and
\begin{equation} \label{eq:Breg-iter4}
    \b^{(k+1)}=\b^{(k)}+\b-A\bu^{(k+1)},
\end{equation}
provided an initial vector $\b^{(0)}\in\mathbb{R}^{m}$.
\end{subequations}
The following lemma provides sufficient conditions for the convergence of the iteration \eqref{eq:Breg-iter34} to the minimizer of the objective function $E$ under the constraint $A\bu = \b$.
\begin{lemma}[Theorem 2~\cite{JiZZ09}] \label{lemma:2}
Suppose the minimization problem
\begin{equation*}
    \min_{\bu} E(\bu)\quad \text{subject to }\quad A\bu=\b
\end{equation*}
has a unique solution $\tilde{\bu}$. For $k\in\mathbb{N}$, we let $\bu^{(k+1)}$ and $\b^{(k+1)}$ be given as in \eqref{eq:Breg-iter3} and \eqref{eq:Breg-iter4}. Then $\lim_{k\rightarrow\infty}\bu^{(k)}=\tilde{\bu}$, provided the following three conditions are satisfied:
\begin{enumerate}
\item[(i)] $\lim_{k\rightarrow\infty}(\bu^{(k+1)}-\bu^{(k)}) = \0$,
\item[(ii)] $\{\bu^{(k)}\}_{k\in\mathbb{N}}$ is a bounded sequence in $\mathbb{R}^{n}$,
\item[(iii)] $\{\b^{(k)}\}_{k\in\mathbb{N}}$ is a bounded sequence in $\mathbb{R}^{m}$.
\end{enumerate}
\end{lemma}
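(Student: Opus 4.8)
The plan is to lean entirely on the two conclusions that Lemma~\ref{lemma:1} already hands us — namely that $\g^{(k)}$ is a genuine subgradient of $E$ at $\bu^{(k)}$, and that the penalty $H$ drives to zero along the iterates — and then to feed these, together with the three boundedness/asymptotic hypotheses, into a standard subsequence argument. First I would translate Lemma~\ref{lemma:1} into the present setting. Since the feasible set $\{\bu : A\bu=\b\}$ is nonempty (it contains $\tilde\bu$), we have $\min_{\bu} H(\bu)=0$, so Lemma~\ref{lemma:1} gives $H(\bu^{(k)})\to 0$, i.e.\ $A\bu^{(k)}\to\b$. Comparing the $\bu$-update \eqref{eq:Breg-iter3} with the general form \eqref{eq:Breg-iter1} (using $\nabla H(\bu)=\lambda A^\top(A\bu-\b)$ and telescoping \eqref{eq:Breg-iter2}) identifies the subgradient explicitly as $\g^{(k)}=\lambda A^\top(\b^{(k)}-\b)$, and Lemma~\ref{lemma:1} certifies $\g^{(k)}\in\partial E(\bu^{(k)})$.

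The core of the proof is the bound $\limsup_{k\to\infty} E(\bu^{(k)})\le E(\tilde\bu)$. I would write the subgradient inequality at $\bu^{(k)}$ and evaluate it at the feasible comparison point $\tilde\bu$, using $A\tilde\bu=\b$:
\[
E(\tilde\bu)-E(\bu^{(k)})\;\ge\;{\g^{(k)}}^\top(\tilde\bu-\bu^{(k)})\;=\;\lambda(\b^{(k)}-\b)^\top(\b-A\bu^{(k)}).
\]
The right-hand side is the inner product of $\b^{(k)}-\b$, which is bounded by hypothesis~3, with $\b-A\bu^{(k)}\to\0$; hence it tends to $0$. Taking $\liminf$ gives $\liminf_k\big(E(\tilde\bu)-E(\bu^{(k)})\big)\ge 0$, which is exactly the claimed $\limsup_k E(\bu^{(k)})\le E(\tilde\bu)$.

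To finish, I would extract the limit. By hypothesis~2 the sequence $\{\bu^{(k)}\}$ is bounded, so it admits a convergent subsequence $\bu^{(k_j)}\to\bu^\ast$. Continuity of $A$ together with $A\bu^{(k)}\to\b$ forces $A\bu^\ast=\b$, so $\bu^\ast$ is feasible, while continuity of $E$ combined with the $\limsup$ bound yields $E(\bu^\ast)\le E(\tilde\bu)$. Because $\tilde\bu$ is the \emph{unique} constrained minimizer, $\bu^\ast=\tilde\bu$. Since every convergent subsequence of the bounded sequence thus shares the single limit $\tilde\bu$, the whole sequence converges to $\tilde\bu$, which is the assertion.

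The step I expect to be the main obstacle is forcing the cross term $\lambda(\b^{(k)}-\b)^\top(\b-A\bu^{(k)})$ to vanish: the subgradient only controls $E$ to first order, so one cannot bound $E(\bu^{(k)})$ directly and must route the estimate through the feasible point $\tilde\bu$ and exploit the precise algebraic form $\g^{(k)}=\lambda A^\top(\b^{(k)}-\b)$. Here hypothesis~3 is essential — without boundedness of $\{\b^{(k)}\}$ the cross term need not vanish even though $A\bu^{(k)}\to\b$. Hypothesis~1 does not enter the subsequence route sketched above; rather, it is the ingredient for the alternative argument that passes to the limit directly in the per-iteration optimality condition $\0\in\partial E(\bu^{(k+1)})+\lambda A^\top(A\bu^{(k+1)}-\b^{(k)})$, where $\bu^{(k+1)}-\bu^{(k)}\to\0$ is what guarantees a limit point of $\{\bu^{(k)}\}$ satisfies the KKT conditions of the constrained problem.
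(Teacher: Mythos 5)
Your argument is correct and complete: the identification $\g^{(k)}=\lambda A^\top(\b^{(k)}-\b)\in\partial E(\bu^{(k)})$ follows directly from the optimality condition of \eqref{eq:Breg-iter3} together with the update \eqref{eq:Breg-iter4}, the cross term $\lambda(\b^{(k)}-\b)^\top(\b-A\bu^{(k)})$ vanishes by hypothesis (iii) combined with $A\bu^{(k)}\to\b$ from Lemma \ref{lemma:1}, and the limit-point argument via hypothesis (ii) and uniqueness of $\tilde{\bu}$ closes the proof. Note that the paper itself gives no proof of this lemma --- it is quoted verbatim as Theorem 2 of \cite{JiZZ09} --- and your reconstruction is precisely the standard argument from that reference; the only curiosity is that, with Lemma \ref{lemma:1} stated as it is here (already yielding $H(\bu^{(k)})\to 0$ whenever $\min H=0$), hypothesis (i) is, as you observe, never actually invoked.
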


The following lemma is useful for verification of condition (i) in Lemma \ref{lemma:2}.
\begin{lemma}[Theorem 3~\cite{JiZZ09}] \label{lemma:3}
Let $E(\bu) = F(\bu) + \alpha \|\bu - \f\|_2^2$ with $\alpha > 0$ and $F$ being a continuous convex function on $\mathbb{R}^n$.

If $\{\bu^{(k+1)}\}_{k\in\mathbb{N}}$ and $\{\g^{(k+1)}\}_{k\in\mathbb{N}}$ are the sequences given by the Bregman iteration \eqref{eq:Breg-iter1} and \eqref{eq:Breg-iter2}, respectively, then
\begin{equation*}
\lim_{k\rightarrow\infty} \left\|\bu^{(k+1)}-\bu^{(k)} \right\|_2 = 0.
\end{equation*}
\end{lemma}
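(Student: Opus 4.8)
The plan is to exploit the strong convexity that the quadratic data term injects into $E$, and then to harvest a telescoping inequality from the monotone decrease of $H$ established in Lemma~\ref{lemma:1}. First I would record the key structural observation: reading the data term as the squared norm $\frac{\lambda}{2}\|\bu-\f\|_2^2$ (the squaring is what makes the argument run), note that $\frac{\lambda}{2}\|\bu-\f\|_2^2-\frac{\lambda}{2}\|\bu\|_2^2$ is affine, so $E-\frac{\lambda}{2}\|\cdot\|_2^2 = F + (\text{affine})$ is convex; that is, $E$ is $\lambda$-strongly convex. Consequently, for each fixed $k$ the Bregman objective $\Phi_k(\bu):=E(\bu)-{\g^{(k)}}^\top(\bu-\bu^{(k)})+H(\bu)$ is again $\lambda$-strongly convex, since subtracting the affine correction ${\g^{(k)}}^\top(\bu-\bu^{(k)})$ and adding the convex $H$ leaves the strong-convexity modulus unchanged.

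The central step uses the defining property of a strongly convex function at its global minimizer. Because $\bu^{(k+1)}=\argmin_\bu \Phi_k(\bu)$, strong convexity gives $\Phi_k(\bu)\ge \Phi_k(\bu^{(k+1)})+\frac{\lambda}{2}\|\bu-\bu^{(k+1)}\|_2^2$ for \emph{all} $\bu$; evaluating at $\bu=\bu^{(k)}$ yields $\frac{\lambda}{2}\|\bu^{(k)}-\bu^{(k+1)}\|_2^2 \le \Phi_k(\bu^{(k)})-\Phi_k(\bu^{(k+1)})$. I would then split the right-hand side into its $E$-part and its $H$-part. The $E$-part is precisely $-\bigl(E(\bu^{(k+1)})-E(\bu^{(k)})-{\g^{(k)}}^\top(\bu^{(k+1)}-\bu^{(k)})\bigr)$, and Lemma~\ref{lemma:1} supplies $\g^{(k)}\in\partial E(\bu^{(k)})$, so the bracketed quantity is nonnegative and its negative is $\le 0$. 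What survives is the clean estimate $\frac{\lambda}{2}\|\bu^{(k)}-\bu^{(k+1)}\|_2^2 \le H(\bu^{(k)})-H(\bu^{(k+1)})$.

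The conclusion then follows by summation. Telescoping this inequality over $k=0,\dots,N$ gives $\frac{\lambda}{2}\sum_{k=0}^N \|\bu^{(k+1)}-\bu^{(k)}\|_2^2 \le H(\bu^{(0)})-H(\bu^{(N+1)}) \le H(\bu^{(0)})$, where I use that $H$ is nonnegative (in the setting of interest it is a squared norm) and nonincreasing along the iteration by Lemma~\ref{lemma:1}. Letting $N\to\infty$ shows the series $\sum_k \|\bu^{(k+1)}-\bu^{(k)}\|_2^2$ converges, and the terms of a convergent series vanish, so $\|\bu^{(k+1)}-\bu^{(k)}\|_2\to 0$.

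I expect the main obstacle to be the bookkeeping in the central step: one must pair the linear correction ${\g^{(k)}}^\top(\bu-\bu^{(k)})$ with the $E$-difference so that the combination is recognizable as (minus) a Bregman gap, and then invoke the subgradient inclusion of Lemma~\ref{lemma:1} with the correct sign. One must also be sure the strong-convexity bound is applied as a global estimate valid for every $\bu$, which is legitimate precisely because $\Phi_k$ is strongly convex on all of $\mathbb{R}^n$. The remaining pieces --- the affine-shift computation establishing strong convexity, and the telescoping --- are routine.
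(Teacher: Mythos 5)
The paper does not actually prove this lemma: it is imported verbatim from \cite[Theorem 3]{JiZZ09}, so there is no in-paper argument to compare against, and your proposal must be judged on its own. It is essentially the standard proof of the cited theorem, and it is correct. Reading the data term as $\frac{\lambda}{2}\|\bu-\f\|_2^2$ (the missing square in the statement is surely a typo --- without it $E$ need not be strongly convex and the argument, indeed the lemma, would fail) gives $\lambda$-strong convexity of $E$ and hence of the per-iteration objective $\Phi_k$; the quadratic-growth bound at the minimizer $\bu^{(k+1)}$, combined with the subgradient inclusion $\g^{(k)}\in\partial E(\bu^{(k)})$ from Lemma \ref{lemma:1} to discard the Bregman gap, yields $\frac{\lambda}{2}\|\bu^{(k+1)}-\bu^{(k)}\|_2^2\le H(\bu^{(k)})-H(\bu^{(k+1)})$, and telescoping gives summability of the squared increments, which is stronger than the stated limit. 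Two small bookkeeping points. First, Lemma \ref{lemma:1} guarantees $\g^{(k)}\in\partial E(\bu^{(k)})$ only for $k\ge 1$, since $\g^{(0)}$ is an arbitrary initial vector and need not be a subgradient at $\bu^{(0)}$; start the telescoping at $k=1$, which changes nothing in the conclusion. Second, the telescoped bound is $H(\bu^{(1)})-H(\bu^{(N+1)})$, so you need $H$ bounded below for the series to converge; you flag this correctly, and it does hold in the only setting the paper uses (where $H$ is a sum of squared norms, and Lemma \ref{lemma:1} already contemplates $\min H=0$), but for a completely general continuous convex $H$ the argument would not close without that hypothesis.
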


Now we are ready to prove the convergence of the associated Bregman iteration, Algorithm \ref{alg:2}, for computing the minimizer of the new model \eqref{eq:1&2-norm TVmodel}. 
Let $\f$ be the vectorization of an observed noisy image of size $m$-by-$n$. 
Based on the techniques in \cite{JiZZ09}, the minimizer of the objective function \eqref{eq:reformulate3} can be carried out by the Bregman iteration:

\begin{equation} \label{eq:udxy-iter}
\begin{aligned} 
    &\Big( \bu^{(k+1)}, \d^{(k+1)}, \x^{(k+1)}, \y^{(k+1)}\Big) \\
    &= \argmin_{\bu,\d,\x,\y} \Big( E(\bu,\d,\x,\y)+ H^{(k)}(\bu,\d,\x,\y)\Big),
\end{aligned}
\end{equation}
where $E(\bu,\d,\x,\y)= {\lVert \x \rVert}_1 + {\lVert \y \rVert}_1 + \mu{\lVert \d \rVert}_1 + \alpha{\lVert \bu-\f \rVert}_2^2$, and

\begin{equation*}
\begin{aligned}
&H^{(k)}(\bu,\d,\x,\y)=\lambda\Big({\big\lVert \d-(\f-\bu)-\b_1^{(k)} \big\rVert}_2^2 \\
&+{\big\lVert \x-D_x \bu-\b_2^{(k)} \big\rVert}_2^2+{\big\lVert \y-D_y \bu-\b_3^{(k)} \big\rVert}_2^2\Big),
\end{aligned}
\end{equation*}
in which
\begin{equation} \label{eq:b-iter}
\begin{aligned}
\b_1^{(k+1)} &= \b_1^{(k)}-\big(\d^{(k+1)}-(\f-\bu^{(k+1)})\big),\\
\b_2^{(k+1)} &= \b_2^{(k)}-\big(\x^{(k+1)}-D_x \bu^{(k+1)}\big),\\
\b_3^{(k+1)} &= \b_3^{(k)}-\big(\y^{(k+1)}-D_y \bu^{(k+1)}\big),
\end{aligned}
\end{equation}
with $\b_1^{(0)} = \b_2^{(0)} = \b_3^{(0)} = \0$.
In the following theorem, we apply the preceding lemmas to analyze the convergence of the Bregman iteration.
\begin{theorem} \label{thm:1}
Let $\{(\bu^{(k+1)}, \d^{(k+1)}, \x^{(k+1)}, \y^{(k+1)})\}_{k\in\mathbb{N}}$ be sequences given by the iteration scheme \eqref{eq:udxy-iter} and let $\tilde{\bu}$ be the unique solution to the minimization problem \eqref{eq:1&2-norm TVmodel}.
Then 
$$
\lim_{k\rightarrow\infty}\bu^{(k)}=\tilde{\bu}.
$$
\end{theorem}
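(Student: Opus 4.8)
The plan is to recast the coupled iteration \eqref{eq:Bregman_new} as a single constrained Bregman iteration of the form \eqref{eq:Breg-iter34} in the stacked variable $\mathbf{w} = (\bu^\top,\d^\top,\x^\top,\y^\top)^\top$, so that Lemma \ref{lemma:2} applies directly. Setting
$$
A = \begin{bmatrix} I & I & 0 & 0 \\ -D_x & 0 & I & 0 \\ -D_y & 0 & 0 & I \end{bmatrix},
\qquad
\mathbf{c} = \begin{bmatrix}\f \\ \0 \\ \0\end{bmatrix},
$$
the three constraints in \eqref{eq:reformulate2} read $A\mathbf{w}=\mathbf{c}$, the penalty $H^{(k)}$ equals $\lambda\lVert A\mathbf{w}-\mathbf{c}-\b^{(k)}\rVert_2^2$ with the stacked Bregman vector $\b^{(k)} = ((\b_1^{(k)})^\top,(\b_2^{(k)})^\top,(\b_3^{(k)})^\top)^\top$, and the update \eqref{eq:b-iter} is precisely \eqref{eq:Breg-iter4}. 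Substituting $\d=\f-\bu$, $\x=D_x\bu$, and $\y=D_y\bu$ shows that $\min_{\mathbf{w}} E(\mathbf{w})$ subject to $A\mathbf{w}=\mathbf{c}$ is exactly \eqref{eq:1&2-norm TVmodel}, whose solution is unique with $\bu$-block equal to $\tilde{\bu}$ by Proposition \ref{prop:uniqueness}. It therefore suffices to verify the three hypotheses of Lemma \ref{lemma:2}, after which $\mathbf{w}^{(k)}\to\tilde{\mathbf{w}}$ and in particular $\bu^{(k)}\to\tilde{\bu}$.

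For the boundedness conditions (ii) and (iii) I would use the subgradient relation from Lemma \ref{lemma:1}, $\g^{(k)}\in\partial E(\mathbf{w}^{(k)})$, together with the identity $\g^{(k)} = 2\lambda A^\top\b^{(k)}$ coming from the first-order optimality of \eqref{eq:Breg-iter3} under the correspondence above. Read blockwise, the $\d$-, $\x$-, and $\y$-components give $2\lambda\b_1^{(k)}\in\mu\,\partial\lVert\d^{(k)}\rVert_1$, $2\lambda\b_2^{(k)}\in\partial\lVert\x^{(k)}\rVert_1$, and $2\lambda\b_3^{(k)}\in\partial\lVert\y^{(k)}\rVert_1$. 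Since the subdifferential of an $\ell_1$-norm lies in the coordinatewise unit $\infty$-ball, the Bregman vectors are confined to fixed boxes, which is condition (iii). The $\bu$-block then reads $2\lambda(\b_1^{(k)}-D_x^\top\b_2^{(k)}-D_y^\top\b_3^{(k)}) = 2\alpha(\bu^{(k)}-\f)$; its left-hand side is now bounded, so $\{\bu^{(k)}\}$ is bounded, and since Lemma \ref{lemma:1} gives $H(\mathbf{w}^{(k)})\to 0$ (as $\min H=0$), the constraint residuals vanish and hence $\d^{(k)},\x^{(k)},\y^{(k)}$ are bounded too, which is condition (ii).

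Condition (i) I would obtain by first applying Lemma \ref{lemma:3} with $E(\mathbf{w})=F(\mathbf{w})+\alpha\lVert\bu-\f\rVert_2^2$ and $F(\mathbf{w})=\lVert\x\rVert_1+\lVert\y\rVert_1+\mu\lVert\d\rVert_1$ convex, so that the quadratic in the $\bu$-block forces $\lVert\bu^{(k+1)}-\bu^{(k)}\rVert\to 0$. The auxiliary differences then follow from the explicit shrinkage solves \eqref{eq:subproblem-d2}--\eqref{eq:subproblem-y2}: non-expansiveness of $\mathrm{shrink}_1(\cdot,\gamma)$ yields $\lVert\d^{(k+1)}-\d^{(k)}\rVert\le\lVert\bu^{(k+1)}-\bu^{(k)}\rVert+\lVert\b_1^{(k)}-\b_1^{(k-1)}\rVert$, where the last term is a constraint residual that vanishes by $H(\mathbf{w}^{(k)})\to 0$, and the same bound holds for $\x$ and $\y$. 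Thus $\mathbf{w}^{(k+1)}-\mathbf{w}^{(k)}\to\0$, giving condition (i), and Lemma \ref{lemma:2} delivers $\bu^{(k)}\to\tilde{\bu}$.

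The step I expect to be most delicate is the boundedness argument, specifically making the identity $\g^{(k)}=2\lambda A^\top\b^{(k)}$ precise: one must track how the target $\mathbf{c}$ and the accumulated Bregman vector combine under the equivalence between the generalized iteration \eqref{eq:Bregman} and the constrained iteration \eqref{eq:Breg-iter34}, and pin down the scaling constant induced by the $\tfrac12$ convention of Lemma \ref{lemma:2}. A secondary subtlety is that the quadratic term of $E$ acts only on the $\bu$-block, so Lemma \ref{lemma:3} controls only $\bu^{(k+1)}-\bu^{(k)}$ directly; the vanishing of the auxiliary differences has to be recovered from non-expansiveness of the shrinkage together with the decay of the residuals, as above.
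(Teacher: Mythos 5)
Your proposal is correct and follows the same skeleton as the paper's proof: identify the scheme with the stacked constrained Bregman iteration, invoke Lemma \ref{lemma:1} for the vanishing constraint residuals, Lemma \ref{lemma:3} for $\|\bu^{(k+1)}-\bu^{(k)}\|\to 0$, and then verify the three hypotheses of Lemma \ref{lemma:2}. The only divergences are in how the two boundedness conditions are discharged. For the Bregman vectors, the paper solves the $\d$-, $\x$-, $\y$-subproblems explicitly by $\mathrm{shrink}_1$, observes via \eqref{eq:b-iter} that $\b_1^{(k+1)},\b_2^{(k+1)},\b_3^{(k+1)}$ are images of the cut function $\mathrm{cut}(\v,\gamma)=\v-\mathrm{shrink}_1(\v,\gamma)$, and reads off $\|\b_1^{(k+1)}\|_\infty\le\mu/(2\lambda)$ and $\|\b_2^{(k+1)}\|_\infty,\|\b_3^{(k+1)}\|_\infty\le 1/(2\lambda)$; your subgradient inclusions $2\lambda\b_1^{(k)}\in\mu\,\partial\|\d^{(k)}\|_1$ etc.\ are exactly the abstract form of that computation and yield the same boxes. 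For the iterates, the paper compares the objective value at $(\bu^{(k+1)},\d^{(k+1)},\x^{(k+1)},\y^{(k+1)})$ with its value at $(\0,\0,\0,\0)$ and uses the just-established bounds on the $\b_i^{(k)}$ to obtain a $k$-independent bound on $E$, which bounds all four blocks simultaneously; you instead extract boundedness of $\{\bu^{(k)}\}$ from the $\bu$-block identity $2\alpha(\bu^{(k)}-\f)=2\lambda\big(\b_1^{(k)}-D_x^\top\b_2^{(k)}-D_y^\top\b_3^{(k)}\big)$ and recover the auxiliary blocks from the vanishing residuals. Both are sound; the paper's comparison trick sidesteps having to make the stacked identity $\g^{(k)}=2\lambda A^\top\b^{(k)}$ precise (the step you rightly flag as the delicate one), while your version exposes the mechanism more transparently. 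Your handling of condition (i) for $\d,\x,\y$ via non-expansiveness of $\mathrm{shrink}_1$ is likewise equivalent to the paper's direct triangle inequality through the residuals $\d^{(k)}-(\f-\bu^{(k)})$, $\x^{(k)}-D_x\bu^{(k)}$, $\y^{(k)}-D_y\bu^{(k)}$.
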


Theorem \ref{thm:1} provides a solid foundation of the split-Bregman iteration, Algorithm \ref{alg:2}, for computing the solution to the model \eqref{eq:1&2-norm TVmodel}, which represents a denoised image. In the next section, we will demonstrate the effectiveness of Algorithm \ref{alg:2} compared to other TV-based algorithms. We provide the detailed proof in the Appendix.

\section{Experiments}
\label{sec:5}

Now we present the experiments of Algorithm \ref{alg:2} and comparisons with other methods. \\

\noindent \textbf{Implementation Detail.}
We evaluate all methods on images with a resolution of $250 \times 250$ under four types of synthetic noise: Gaussian noise, salt and pepper (S\&P) noise, Poisson noise, and speckle noise. Following the default setting in MATLAB's \texttt{imnoise} function, Gaussian noise is generated with zero mean and variance $0.01$, S\&P noise is applied with noise density $0.05$, Poisson noise is generated according to the image intensity values, and speckle noise is generated as multiplicative noise with variance $0.05$. For all compared methods, we set the parameters to $\lambda = 1$, $\mu = 1$, $\alpha = 1$. For RGB images, we apply the proposed model independently to each color channel. \\

\noindent \textbf{Metrics.}
Let $U$ and $T$ be the denoised and the original images of size $m\times n$, respectively. The similarity between two images is commonly measured by the following measurements: peak signal-to-noise ratio (PSNR), structural similarity (SSIM). It is known that PSNR and SSIM are commonly regarded as important indicators of the similarity of images. In our numerical results, we use the product of PSNR and SSIM as our similarity measurement, defined as $\mathrm{PPS}(U,T)=\mathrm{PSNR}(U,T) \cdot \mathrm{SSIM}(U,T)$. \\

\noindent \textbf{Benchmark.}
We demonstrate numerical experiments of state-of-the-art algorithms for image denoising for various noise types. The benchmark images used in the experiments are shown in Figure \ref{fig:images}. The \textbf{dog} image typically symbolizes the ubiquitous realities captured in real-world scenarios. The \textbf{painting} image represents common occurrences of digital art and illustrations. The \textbf{mosaic} image specifically serves to challenge TV models that are inept in processing multiple boundaries. The \textbf{cameraman} image is a commonly utilized reference in the evaluation of denoising methodologies. \\

\begin{figure}[t]
    \centering
    \footnotesize 
    \stackunder[5pt]{\includegraphics[width=0.23\linewidth]{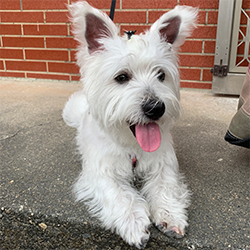}}{dog}
    \hskip 0.01\linewidth
    \stackunder[5pt]{\includegraphics[width=0.23\linewidth]{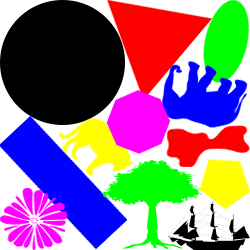}}{painting}
    \hskip 0.01\linewidth
    \stackunder[5pt]{\includegraphics[width=0.23\linewidth]{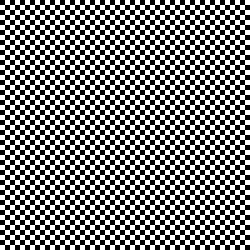}}{mosaic}
    \hskip 0.01\linewidth
    \stackunder[5pt]{\includegraphics[width=0.23\linewidth]{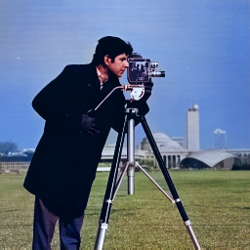}}{cameraman}
    \caption{Benchmark images used in our experiments.}
    \label{fig:images}
    \vspace*{-15pt}
\end{figure}

\noindent \textbf{Results.} In the experiments, we consider several types of noise, including Gaussian noise, S\&P noise, Poisson noise, speckle noise, uniform noise, and their combinations. Qualitative results under these noise settings are shown in Figure~\ref{fig:qualitative_result}. All the experiments are performed in MATLAB on a personal computer. In Table \ref{tab:1}, we demonstrate the similarity of the denoised images and the ground truth in the measurement of PPS averaged over the four images. The rationale for overlaying the 1-norm model with the 2-norm Isotropic and Anisotropic models in the denoising process lies in our intent to ascertain whether such combinations can achieve comparable performance to the mixed-norm model. Notably, our proposed MixTV model exhibits superior performance compared to other models.

Furthermore, when excluding DnCNN from consideration, it could be asserted that the MixTV model decidedly outperforms other TV-based models in effectiveness and efficiency. It is evident that the MixTV model exhibits robustness across various noise types.

\section{Conclusion}
\label{sec:6}
The TV model is a well-known image-denoising model. Its success is based on strong theoretical properties and efficient numerical schemes. Conventionally, the regularity term has been the main focus of research. In this paper, we also incorporate the data fidelity term into consideration. We introduce a new TV model by combining the 1-norm and 2-norm data fidelity terms. The experimental results indicate that the proposed new model often yields better outcomes than other commonly used TV models. Numerical experiments demonstrate the effectiveness and efficiency of the new model across various kinds of images and different types of noise and their combinations. Furthermore, we prove the convergence of the new model. The remarkable outcomes obtained in this study strengthen the practicality and effectiveness of the TV method in image processing.

\begin{table*}[t]
\resizebox{\textwidth}{!}{
\begin{tabular}{lrrrrrrrrr>{\columncolor{blue!8}}r}
\hline
\multirow{2}{*}{\textbf{Noise Type}} & \multirow{2}{*}{\textbf{Noisy}} & \multirow{2}{*}{\footnotesize \textbf{1-norm}} & \multirow{2}{*}{\footnotesize \textbf{Isotropic}} & \multirow{2}{*}{\footnotesize \textbf{Anisotropic}} & {\footnotesize \textbf{1-norm}} & {\footnotesize \textbf{1-norm}} & {\footnotesize \textbf{Isotropic}} & {\footnotesize \textbf{Anisotropic}} & \multirow{2}{*}{\textbf{DnCNN}} & {\footnotesize \textbf{MixTV}} \\
  & & & & & {\footnotesize \textbf{+ Isotropic}} & {\footnotesize \textbf{+ Anisotropic}} & {\footnotesize \textbf{+ 1-norm}} & {\footnotesize \textbf{+ 1-norm}} & & \textbf{(Ours)} \\
\hline

Gaussian            &	16.16 & 19.41 & 9.00 & 8.53 & 9.21 & 8.74 & 8.99 & 8.53 & \textbf{21.87} & 21.15  \\
S\&P                &	11.58 & 20.48 & 9.03 & 8.54 & 6.52 & 9.12 & 9.02 & 8.54 & 15.86 & \textbf{22.84}  \\
Poisson             &	26.33 & 22.51 & 9.60 & 9.04 & 9.54 & 9.03 & 9.59 & 9.04 & \textbf{28.21} & 24.86  \\
speckle             &	14.44 & 17.52 & 9.09 & 8.56 & 9.08 & 8.62 & 9.08 & 8.55 & \textbf{21.07} & 19.42  \\
uniform             &	11.00 & 18.66 & 8.49 & 8.04 & 6.46 & 9.03 & 8.48 & 8.04 & 16.81 & \textbf{21.67}  \\
Gaussian + S\&P     &	9.53 & 18.04 & 8.51 & 8.06 & 9.18 & 8.71 & 8.50 & 8.06 & 16.36 & \textbf{19.29}  \\
Gaussian + Poisson  &	15.10 & 18.88 & 8.92 & 8.45 & 9.11 & 8.64 & 8.91 & 8.44 & \textbf{21.15} & 20.51  \\
Gaussian + speckle  &	10.80 & 15.96 & 8.47 & 8.03 & 8.66 & 8.24 & 8.46 & 8.03 & \textbf{18.11} & 17.39  \\
Gaussian + uniform  &	9.03 & 15.79 & 7.99 & 7.59 & 8.78 & 8.37 & 7.99 & 7.59 & 15.92 & \textbf{17.61}  \\
S\&P + Gaussian     &	9.87 & 17.93 & 8.49 & 8.05 & 9.12 & 8.67 & 8.48 & 8.05 & 16.60 & \textbf{19.17}  \\
S\&P + Poisson      &	11.23 & 19.82 & 8.92 & 8.45 & 9.51 & 8.99 & 8.92 & 8.45 & 16.62 & \textbf{21.94}  \\
S\&P + speckle      &	9.11 & 16.64 & 8.46 & 8.02 & 9.00 & 8.54 & 8.45 & 8.02 & 15.91 & \textbf{18.18}  \\
S\&P + uniform      &	7.73 & 17.48 & 7.99 & 7.59 & 6.46 & 9.00 & 7.99 & 7.59 & 14.51 & \textbf{20.12}  \\
Poisson + Gaussian  &	15.04 & 18.80 & 8.93 & 8.46 & 9.12 & 8.65 & 8.92 & 8.46 & \textbf{21.16} & 20.41  \\
Poisson + S\&P   &	11.06 & 19.71 & 8.94 & 8.46 & 9.52 & 9.00 & 8.93 & 8.46 & 16.57 & \textbf{21.99}  \\
Poisson + speckle   &	13.71 & 17.08 & 9.01 & 8.50 & 8.98 & 8.53 & 9.00 & 8.49 & \textbf{20.49} & 18.88  \\
Poisson + uniform   &	10.49 & 17.94 & 8.40 & 7.96 & 9.36 & 8.86 & 8.40 & 7.96 & 16.81 & \textbf{20.71}  \\
speckle + Gaussian  &	10.88 & 16.04 & 8.44 & 8.00 & 8.65 & 8.23 & 8.43 & 8.00 & \textbf{18.01} & 17.44  \\
speckle + S\&P      &	8.82 & 16.71 & 8.48 & 8.04 & 9.04 & 8.59 & 8.47 & 8.04 & 15.60 & \textbf{18.23}  \\
speckle + Poisson   &	13.90 & 17.15 & 8.96 & 8.44 & 8.92 & 8.47 & 8.96 & 8.44 & \textbf{20.45} & 18.94  \\
speckle + uniform   &	8.37 & 15.40 & 7.95 & 7.55 & 8.71 & 8.28 & 7.95 & 7.55 & 15.36 & \textbf{17.09}  \\
uniform + Gaussian  &	9.01 & 15.88 & 8.01 & 7.60 & 8.81 & 8.40 & 8.00 & 7.60 & 15.90 & \textbf{17.65}  \\
uniform + S\&P      &	7.64 & 17.55 & 8.01 & 7.60 & 6.47 & 9.01 & 8.00 & 7.60 & 14.49 & \textbf{20.20}  \\
uniform + Poisson   &	10.49 & 18.05 & 8.39 & 7.95 & 9.36 & 8.86 & 8.38 & 7.95 & 16.79 & \textbf{20.72}  \\
uniform + speckle   &	8.33 & 15.40 & 7.96 & 7.57 & 8.71 & 8.29 & 7.95 & 7.57 & 15.30 & \textbf{17.09}  \\
\hline
\end{tabular}}
\caption{Comparison of various variational methods for image denoising in terms of the PPS score.}
\label{tab:1}
\vspace*{-15pt}
\end{table*}

\begin{figure*}[h]
    \centering
    \includegraphics[width=0.85\linewidth]{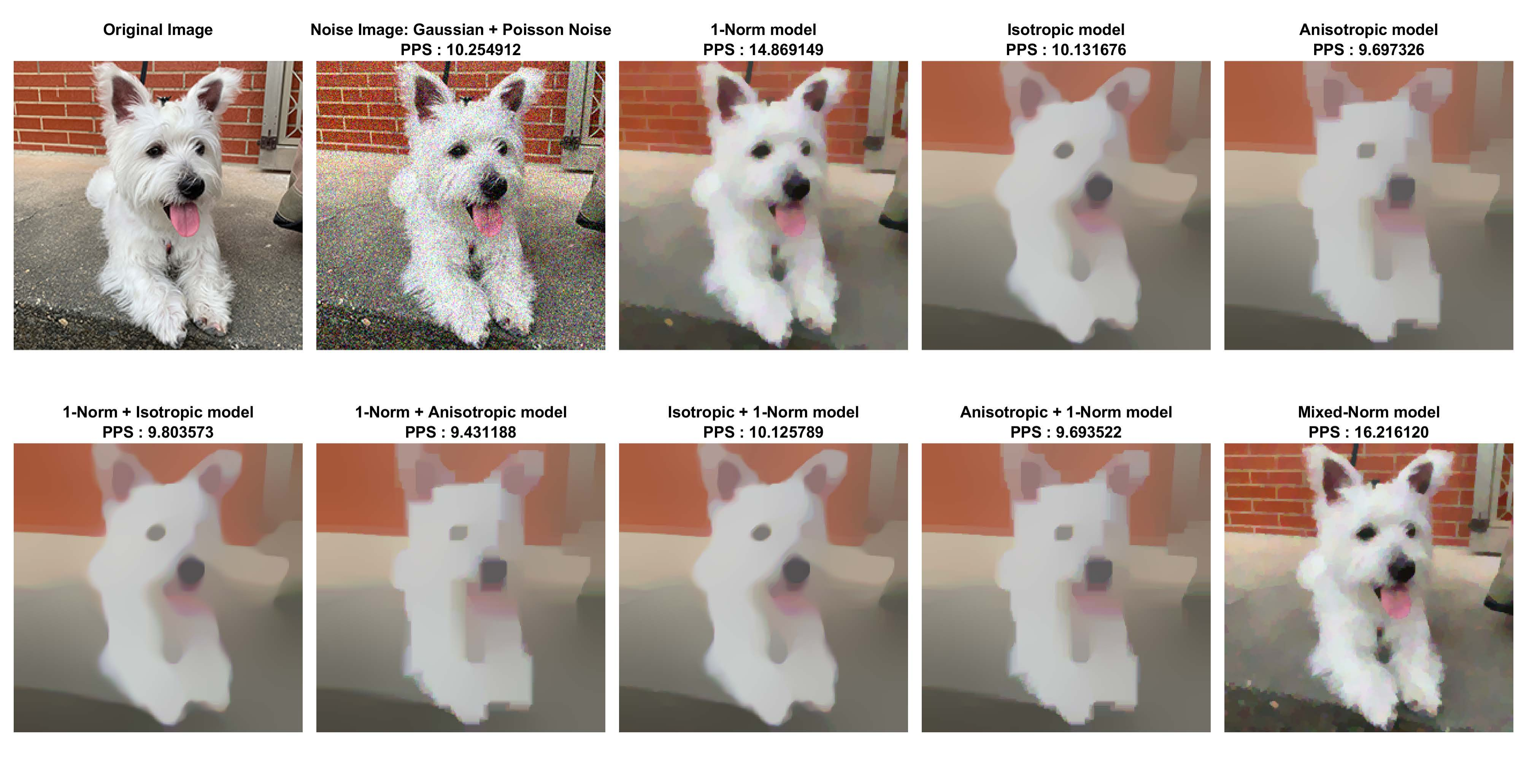}
    \includegraphics[width=0.85\linewidth]{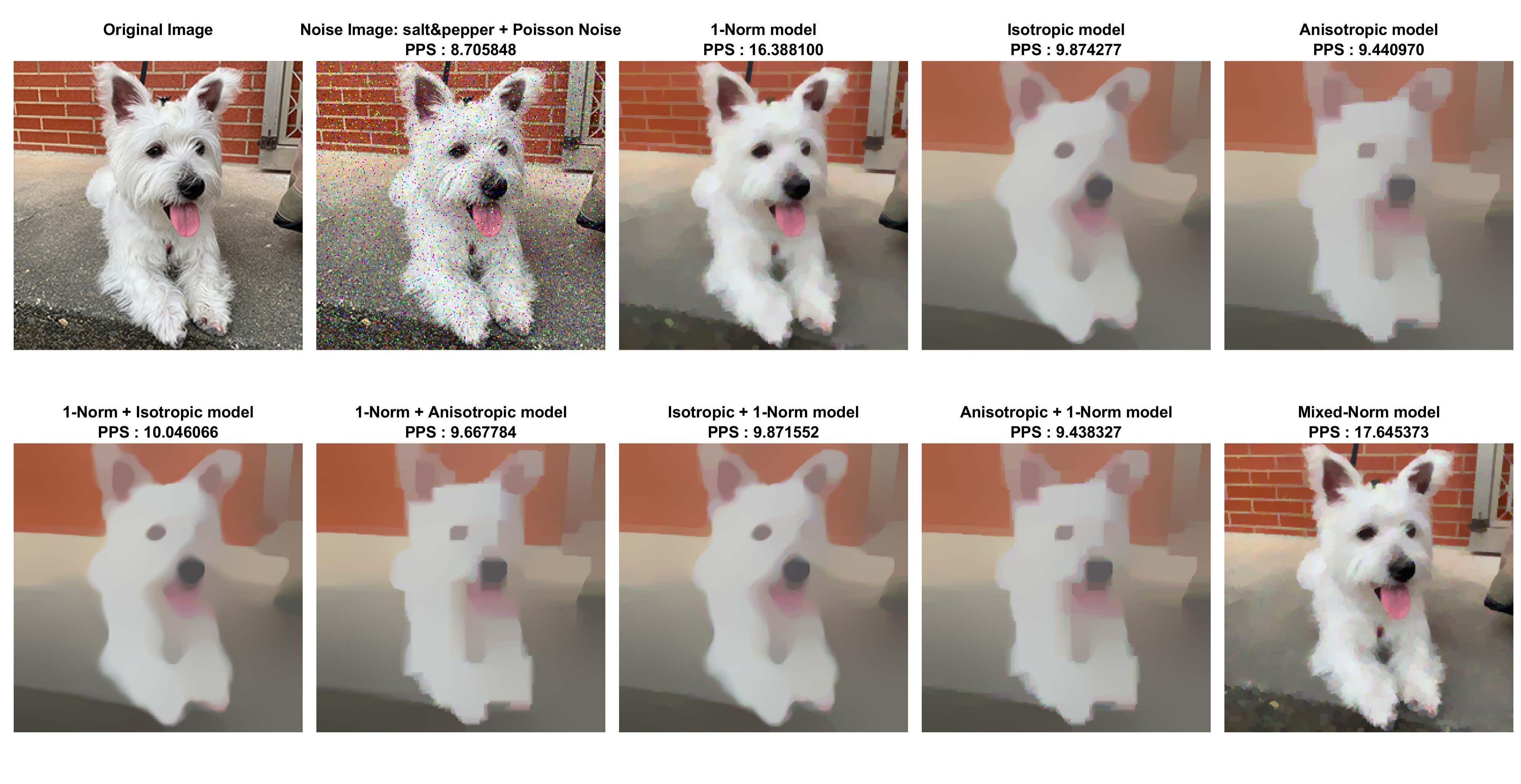}
    \includegraphics[width=0.85\linewidth]
    {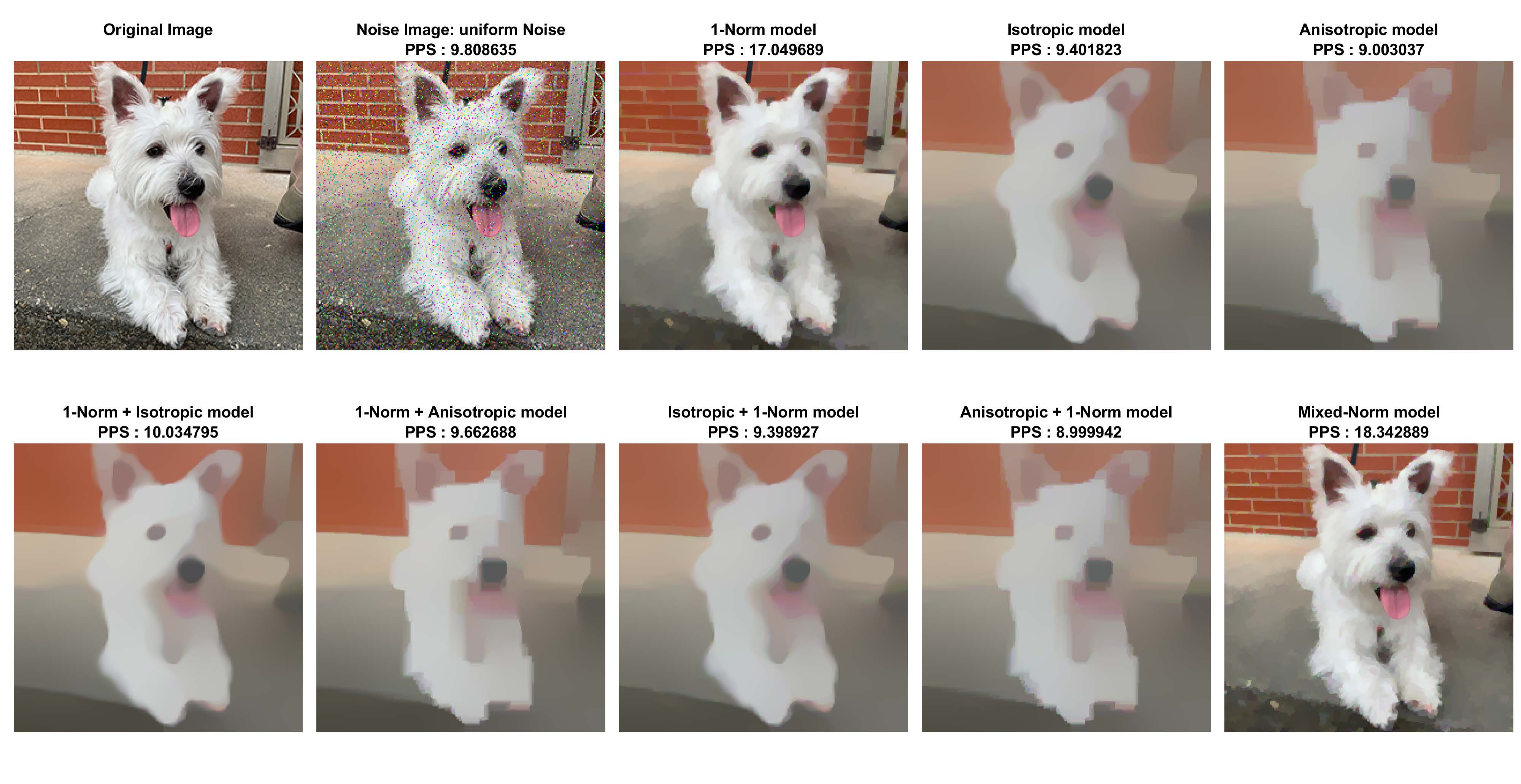}
    \caption{Qualitative results of various variational methods for image denoising.}
    \label{fig:qualitative_result}
\end{figure*}

\bibliographystyle{plain}
\bibliography{references}

\clearpage
\onecolumn
\setcounter{section}{0}

\begin{center}
    \LARGE{\textbf{Appendix}}
\end{center}

\

\section{Notations}
In this paper, we use the following notations. Other notations will be clearly defined when they appear.
\begin{itemize}
    \item $\mathbb{R}$ and $\mathbb{N}$ denote the sets of real numbers and natural numbers, respectively.
    \item Bold letters, e.g., $\b$, $\f$, $\bu$, denote vectors.
    \item Capital letters, e.g., $A$, $F$, $U$, denote matrices.
    \item $\f_{i}$ denotes the $i$th entry of the vector $\f$.
    \item $A_{i,j}$ denotes the $(i,j)$th entry of the matrix $A$.
    \item $I_n$ denotes the identity matrix of size $n$-by-$n$.
    \item $\0$ denotes the zero vector of an appropriate length.
    \item $\text{vec}(A)=(A_{1,1}, \ldots, A_{m,1},
    A_{1,2}, \ldots, A_{m,2},
    \ldots, A_{1,n}, \ldots, A_{m,n})^\top$ denotes the vectorization of $A\in\mathbb{R}^{m\times n}$.
\end{itemize}

\section{Mathematical Details}
Following \cite{JiZZ09}, we provide the proofs of the propositions, lemmas, and theorems stated in the main paper below.

\begin{proposition} \label{prop:uniqueness}
The minimization problem (\textcolor{red}{5}) has a unique solution.
\end{proposition}
\begin{proof}
The composition of a convex function with an affine map is convex, and the sum of a convex function and a strictly convex function is strictly convex. Hence, (\textcolor{red}{5}) is strictly convex and admits a unique minimizer. 
\end{proof}

\begin{theorem} \label{thm:1}
Let $\{(\bu^{(k+1)}, \d^{(k+1)}, \x^{(k+1)}, \y^{(k+1)})\}_{k\in\mathbb{N}}$ be sequences given by the iteration scheme (\textcolor{red}{15}) and let $\tilde{\bu}$ be the unique solution to the minimization problem (\textcolor{red}{5}).
Then 
$$
\lim_{k\rightarrow\infty}\bu^{(k)}=\tilde{\bu}.
$$
\end{theorem}
\begin{proof}
By Proposition \ref{prop:uniqueness}, there exists a unique solution $\tilde{\bu}$ to the minimization problem (\textcolor{red}{5}). From the equivalences of the reformulations (\textcolor{red}{6}) and (\textcolor{red}{7}), we conclude that $(\tilde{\bu}, \tilde{\d}, \tilde{\x}, \tilde{\y})$ is the unique solution to the minimization problem (\textcolor{red}{7}) that satisfies $\tilde{\d} = \f-\tilde{\bu}$, $\tilde{\x} = D_x\tilde{\bu}$ and $\tilde{\y} = D_y\tilde{\bu}$.
By applying Lemma \textcolor{red}{1} to (\textcolor{red}{7}) with 
$$
E(\bu,\d,\x,\y)= {\lVert \x \rVert}_1 + {\lVert \y \rVert}_1 + \mu{\lVert \d \rVert}_1 + \alpha{\lVert \bu-\f \rVert}_2^2,
$$
and
$$
H(\bu,\d,\x,\y)=\lambda\Big({\big\lVert \d-(\f-\bu) \big\rVert}_2^2+{\big\lVert \x-D_x \bu \big\rVert}_2^2+{\big\lVert \y-D_y \bu \big\rVert}_2^2\Big),
$$
we obtain
\begin{equation} \label{eq:Thm4-1}
\lim_{k\rightarrow\infty}{\big\lVert \d^{(k)}-(\f-\bu^{(k)}) \big\rVert}_2^2=0, ~
\lim_{k\rightarrow\infty}{\big\lVert \x^{(k)}-D_x \bu^{(k)} \big\rVert}_2^2=0, ~ \text{and} ~
\lim_{k\rightarrow\infty}{\big\lVert \y^{(k)}-D_y \bu^{(k)} \big\rVert}_2^2=0.
\end{equation}
By applying Lemma \textcolor{red}{3} to $E(\mathbf{u}, \mathbf{d}, \mathbf{x}, \mathbf{y})$, we obtain
\begin{equation*}
\lim_{k\rightarrow\infty}\left\|\bu^{(k+1)}-\bu^{(k)}\right\| = 0.
\end{equation*}
By multiplying the sequence by $D_x$ and $D_y$, respectively, it follows that
\begin{equation} \label{eq:Proof1}
\lim_{k\rightarrow\infty}\left\|D_x \bu^{(k+1)}-D_x \bu^{(k)}\right\| = 0
~\text{ and }~ 
\lim_{k\rightarrow\infty}\left\|D_y \bu^{(k+1)}-D_y \bu^{(k)}\right\| = 0.
\end{equation}
From \eqref{eq:Thm4-1}, \eqref{eq:Proof1} and the triangle inequality, we obtain
\begin{equation*}
\lim_{k\rightarrow\infty}\left\|\d^{(k+1)}-\d^{(k)}\right\|=0, ~
\lim_{k\rightarrow\infty}\left\|\x^{(k+1)}-\x^{(k)}\right\| = 0, ~\text{and}~
\lim_{k\rightarrow\infty}\left\|\y^{(k+1)}-\y^{(k)}\right\| = 0.
\end{equation*}
To prove the convergence $\lim_{k\rightarrow\infty}\bu^{(k)}=\tilde{\bu}$, by applying Lemma \textcolor{red}{2}, it suffices to show that the sequences $\{(\bu^{(k)}, \d^{(k)}, \x^{(k)}, \y^{(k)})\}_{k\in\mathbb{N}\backslash\{0\}}$, $\{\b^{(k)}_1\}_{k\in\mathbb{N}\backslash\{0\}}$, $\{\b^{(k)}_2\}_{k\in\mathbb{N}\backslash\{0\}}$, and $\{\b^{(k)}_3\}_{k\in\mathbb{N}\backslash\{0\}}$ are bounded.
From the optimality condition of (\textcolor{red}{15}), we have
\begin{align*}
\mu {\big\lVert \d^{(k+1)} \big\rVert}_1+\lambda{\big\lVert \d^{(k+1)}- (\f - \bu^{(k+1)})-\b^{(k)}_1 \big\rVert}^2_2 & \leq \mu {\big\lVert \d \big\rVert}_1+\lambda{\big\lVert \d - (\f-\bu^{(k+1)})-\b^{(k)}_1 \big\rVert}^2_2 ~\text{for all $\d\in\mathbb{R}^{m n}$}, \\
{\big\lVert \x^{(k+1)} \big\rVert}_1+\lambda{\big\lVert \x^{(k+1)}-D_x \bu^{(k+1)}-\b^{(k)}_2 \big\rVert}^2_2 & \leq {\big\lVert \x\big\rVert}_1+\lambda{\big\lVert \x-D_x \bu^{(k+1)}-\b^{(k)}_2 \big\rVert}^2_2 ~\text{for all $\x\in\mathbb{R}^{m n}$}, \\
{\big\lVert \y^{(k+1)} \big\rVert}_1+\lambda{\big\lVert \y^{(k+1)}-D_y \bu^{(k+1)}-\b^{(k)}_3 \big\rVert}^2_2 & \leq {\big\lVert \y\big\rVert}_1+\lambda{\big\lVert \y-D_y \bu^{(k+1)}-\b^{(k)}_3 \big\rVert}^2_2 ~\text{for all $\y\in\mathbb{R}^{m n}$}.
\end{align*}
Thus, from the property of the soft shrinkage function defined in (\textcolor{red}{9}), we have
\begin{align*}
\d^{(k+1)} &= \mathrm{shrink}_1\left(\f-\bu^{(k+1)}+\b^{(k)}_1,\frac{\mu}{2\lambda}\right),\\
\x^{(k+1)} &= \mathrm{shrink}_1\left(D_x \bu^{(k+1)}+\b^{(k)}_2,\frac{1}{2\lambda}\right),\\
\y^{(k+1)} &= \mathrm{shrink}_1\left(D_y \bu^{(k+1)}+\b^{(k)}_3,\frac{1}{2\lambda}\right).
\end{align*}
These three equations together with (\textcolor{red}{16}) give
\begin{align*}
\b^{(k+1)}_1 &= \mathrm{cut}\left( \f-\bu^{(k+1)}+\b^{(k)}_1,\frac{\mu}{2\lambda}\right),\\
\b^{(k+1)}_2 &= \mathrm{cut}\left( D_x \bu^{(k+1)}+\b^{(k)}_2,\frac{1}{2\lambda}\right),\\
\b^{(k+1)}_3 &= \mathrm{cut}\left( D_y \bu^{(k+1)}+\b^{(k)}_3,\frac{1}{2\lambda}\right),
\end{align*}
where the cut function is defined as 
$$
\mathrm{cut}(\v,\gamma) = \v - \mathrm{shrink}_1(\v,\gamma) = \Big[\mathrm{sign}(\v_i) \min\big\{|\v_i|, \gamma\big\}\Big]_{i=1}^{mn}.
$$

It is clear that $\| \, \mathrm{cut}(\v,\gamma) \, \|_\infty \leq \gamma$. 
In particular,
\begin{align*}
{\big\lVert \b^{(k+1)}_1 \big\rVert}_\infty \leq \frac{\mu}{2\lambda}, ~ 
{\big\lVert \b^{(k+1)}_2 \big\rVert}_\infty \leq \frac{1}{2\lambda} ~ \text{ and } ~
{\big\lVert \b^{(k+1)}_3 \big\rVert}_\infty \leq \frac{1}{2\lambda}.
\end{align*}
Furthermore, the optimality condition of (\textcolor{red}{15}) implies 
\begin{equation*}
E(\bu^{(k+1)},\d^{(k+1)},\x^{(k+1)},\y^{(k+1)})+ H^{(k)}(\bu^{(k+1)},\d^{(k+1)},\x^{(k+1)},\y^{(k+1)}) \leq E(\bu,\d,\x,\y)+ H^{(k)}(\bu,\d,\x,\y)
\end{equation*}
holds true for all $(\bu,\d,\x,\y)$. In particular, we consider the case $\bu=\d=\x=\y=\0$. From the nonnegativity of $H^{(k)}$, we obtain
\begin{align*}
& {\big\lVert \x^{(k+1)} \big\rVert}_1 + {\big\lVert \y^{(k+1)} \big\rVert}_1 + \mu{\big\lVert \d^{(k+1)} \big\rVert}_1 + \alpha{\big\lVert \bu^{(k+1)}-\f \big\rVert}_2^2 \\
&= E(\bu^{(k+1)},\d^{(k+1)},\x^{(k+1)},\y^{(k+1)}) \\
&\leq E(\0,\0,\0,\0) + H^{(k)}(\0,\0,\0,\0) - H^{(k)}(\bu^{(k+1)},\d^{(k+1)},\x^{(k+1)},\y^{(k+1)}) \\
&\leq E(\0,\0,\0,\0) + H^{(k)}(\0,\0,\0,\0) \\
&= \alpha{\big\lVert \f \big\rVert}_2^2+\lambda\Big({\big\lVert \b_1^{(k)}+\f \big\rVert}_2^2+{\big\lVert  \b_2^{(k)} \big\rVert}_2^2+{\big\lVert \b_3^{(k)} \big\rVert}_2^2\Big) \\
& \leq (\alpha+2\lambda){\big\lVert \f \big\rVert}_2^2+\lambda\Big({2 \big \lVert \b_1^{(k)} \big\rVert}_2^2+{\big\lVert \b_2^{(k)} \big\rVert}_2^2+{\big\lVert \b_3^{(k)} \big\rVert}_2^2\Big) \\
& \leq (\alpha+2\lambda){\big\lVert \f \big\rVert}_2^2 + \frac{mn (\mu^2+1)}{2 \lambda}
\end{align*}
which is a finite value independent of $k$.

Therefore, $\{(\bu^{(k)}, \d^{(k)}, \x^{(k)}, \y^{(k)})\}_{k\in\mathbb{N}\backslash\{0\}}$ is a bounded sequence. 
    By Lemma \textcolor{red}{2}, we conclude that the sequence $\{(\bu^{(k)}, \d^{(k)}, \x^{(k)}, \y^{(k)})\}_{k\in\mathbb{N}}$ converges and thereby $\lim_{k\rightarrow\infty}\bu^{(k)}=\tilde{\bu}$.
\end{proof}

Theorem \ref{thm:1} provides a solid foundation of the split-Bregman iteration, Algorithm \textcolor{red}{1}, for computing the solution to the model (\textcolor{red}{5}), which represents a denoised image.

\section*{Acknowledgements}
This research is supported in part by the National Center for Theoretical Sciences and the National Science and Technology Council, Taiwan, under the grant number NSTC-113-2115-M-003-012-MY2.

\end{document}